\def\1{\bm{1}}
\DeclareMathAlphabet{\mathsfit}{\encodingdefault}{\sfdefault}{m}{sl}
\SetMathAlphabet{\mathsfit}{bold}{\encodingdefault}{\sfdefault}{bx}{n}
\def\gA{{\mathcal{A}}}
\def\gX{{\mathcal{X}}}
\def\sW{{\mathbb{W}}}
\newcommand{\E}{\mathbb{E}}
\newcommand{\R}{\mathbb{R}}
\newcommand{\defeq}{\vcentcolon=} 
\DeclareMathOperator*{\argmin}{arg\,min}
\DeclarePairedDelimiterX{\infdivx}[2]{(}{)}{%
  #1\;\delimsize\|\;#2%
}
\DeclarePairedDelimiterX{\inp}[2]{\langle}{\rangle}{#1, #2}
\newcounter{protocol}
\newenvironment{nalign}{
    \begin{equation}
    \begin{aligned}
}{
    \end{aligned}
    \end{equation}
    \ignorespacesafterend
}
\title[Understanding Adam's Momentum Factors]{How to Set $\beta_1, \beta_2$ in Adam: An Online Learning Perspective}
\begin{document}

\maketitle

\begin{abstract}%
  While Adam is one of the most effective optimizer for training large-scale machine learning models, a theoretical understanding of how to optimally set its momentum factors, $\beta_1$ and $\beta_2$, remains largely incomplete. 
  Prior works have shown that Adam can be seen as an instance of Follow-the-Regularized-Leader (FTRL), one of the most important class of algorithms in online learning. 
  The prior analyses in these works required setting $\beta_1 = \sqrt{\beta_2}$, which does not cover the more practical cases with $\beta_1 \neq \sqrt{\beta_2}$.
  We derive novel, more general analyses that hold for both $\beta_1 \geq \sqrt{\beta_2}$ and $\beta_1 \leq \sqrt{\beta_2}$. 
  In both cases, our results strictly generalize the existing bounds. 
  Furthermore, we show that our bounds are tight in the worst case.
  We also prove that setting $\beta_1 = \sqrt{\beta_2}$ is optimal for an oblivious adversary, but sub-optimal for an non-oblivious adversary.
\end{abstract}

\begin{keywords}%
  Adam optimizer, hyperparameter tuning, discounted regret, online-to-nonconvex.
\end{keywords}

\section{Introduction}
Training a neural network is an instance of a nonsmooth nonconvex optimization problem, where the goal is to find a $w^*$ that minimizes a function $F: \sW \to \R$, where $\sW$ is the set of possible solutions.
A training algorithm usually starts from some initial solution $w_0$, and then iteratively update the solution $w_{t+1} = w_t + \Delta_t$.
Here, $\Delta_t$ specifies the update in round $t$.
The most efficient training algorithms are first-order methods, which computes $\Delta_t$ based on the (possibly stochastic) gradients $g_t$, where $\E[g_t] = \frac{dF}{dw_t}$.
The Adam optimizer~\citep{KingmaAndBa2017adammethodstochasticoptimization} is among the most popular first-order methods, which computes a coordinate-wise update of the form
\begin{align}
  \Delta_t = -\alpha_t \frac{\sum_{s = 0}^{t-1}\beta_1^{t-1-s} g_s}{\sqrt{\sum_{s= 0}^{t-1} \beta_2^{t-1-s} g_s^2}},
  \label{eq:AdamUpdate}
\end{align}
where $\alpha_t > 0$ is the learning rate of the optimization problem, and $0 < \beta_1, \beta_2 < 1$ are the first and second-order momentum discount factors.
As justified by~\cite{AhnNeurIPS2024AdamisEffectiveNonConvex}, we omit the bias-correction terms since they are coordinate-independent and can be absorbed into $\alpha_t$.

Ever since Adam was introduced, numerous experimental results~\citep[see e.g.][]{Orvieto2025SearchAdamSecretSauce} have shown that tuning $\beta_1$ and $\beta_2$ play a vital role in the empirical performance of Adam.
Despite its practical importance, a theoretical foundation for tuning these discount factors is still lacking, and current practitioners mostly rely on either expensive grid search to tune these factors.
Recent empirical findings~\citep{Orvieto2025SearchAdamSecretSauce} found that for each $\beta_1$, the optimal value of $\beta_2$ often satisfies $\beta_1 \leq \sqrt{\beta_2}$, and that $\beta_1 = \sqrt{\beta_2}$ is in fact optimal in a number of experiments. Figure~\ref{fig:betascorr} in the appendix illustrates an example.

A recently emerged framework for nonconvex optimization is the online-to-nonconvex framework~\citep{CutkoskyICML2023Online2Nonconvex}, which casts a new perspective on $\Delta_t$. 
In this framework, $\Delta_t$ is the prediction in round $t$ of an online learning algorithm in a $1$-dimensional online linear optimization problem.
In this framework,~\citet{AhnAdamisFTRL2024} recently showed that the Adam's update rule in~\Cref{eq:AdamUpdate} corresponding to the output of an Follow-the-Reguralized-Leader (FTRL) algorithm~\citep[see e.g.][]{OrabonaIntroToOnlineLearningBook} with the sequence of losses $\ell_t(x) = \beta_1^{-t}g_t x$ and the square function $\frac{x^2}{2}$ as the regularizer.
More concretely,~\Cref{eq:AdamUpdate} is equivalent to
\begin{align}
  \Delta_t = \argmin_{x \in \R} \frac{1}{2\eta_t} x^2 + \sum_{s=0}^{t-1} \beta_1^{-s}g_s x = -\eta_t \sum_{s=0}^{t-1} \beta_1^{-s} g_s,
\end{align}
where $\eta_t = \alpha_t \frac{(\beta_1/\sqrt{\beta_2})^{t-1}}{\sqrt{\sum_{s=0}^{t-1} \beta_2^{-s}g_s^2}}$ is the learning rate of the online learning problem.
The full procedure is given in Algorithm~\ref{algo:AdamfromFTRL}.

~\citet{AhnAdamisFTRL2024} considered a special version of Adam with $\beta_1 = \sqrt{\beta_2}$, and proved an upper bound on its $\beta_1$-discounted regret $R_{T, \beta_1}$. Formally, let $D > 0$ and $\gX = [-D, D]$ be a set that contains all the possible values of $\Delta_t$. Let $u \in \gX$. 
The $\beta_1$-discounted regret with respect to $u$ after $T$ rounds is
\begin{align}
    R_{T, \beta_1}(u) = \beta_1^{T}\sum_{t=1}^T \beta_1^{-t} g_t(\Delta_t - u).
    \label{eq:discountedRegretFormula}
\end{align}
By setting $\beta_2 = \beta_1^2$,~\citet{AhnAdamisFTRL2024} and~\citet{AhnNeurIPS2024AdamisEffectiveNonConvex} showed that Adam obtains $R_{T, \beta_1}(u) \leq O(D\sqrt{\sum_{t=0}^T \beta_1^{T-t}g_t^2})$ for all $u \in \gX$, which then led to optimal convergence rate on nonsmooth nonconvex optimization problems. 
While these results are significant, they only hold for the case $\beta_1 = \sqrt{\beta_2}$ and do not hold for more general settings.
In fact, both of these works state that the analysis for the general cases that include $\beta_1 \neq \sqrt{\beta_2}$ is an important open problem.
Furthermore, existing results do not explain why $\beta_1 = \sqrt{\beta_2}$ is a good choice, and how optimal or sub-optimal this choice is.

\textbf{Contributions.} This work takes a step towards addressing the aforementioned open problem posed by~\citet{AhnAdamisFTRL2024} and~\citet{AhnNeurIPS2024AdamisEffectiveNonConvex}.  
More specifically, we focus on deriving regret upper bounds on $R_{T, \beta_1}(u)$ in~\Cref{eq:discountedRegretFormula} for Adam with $\beta_1 \neq \sqrt{\beta_2}$.
Our main contributions are as follows.
\begin{itemize}
  \item In Section~\ref{sec:beta1smallerthansqrtbeta2}, under the condition $\beta_1 \leq \sqrt{\beta_2}$, Corollary~\ref{corolary:constantAlpha} shows a new $\beta_1$-discounted regret bound of order 
  $O(D \frac{\sqrt{\beta_2}}{\beta_1} \sqrt{\sum_{t=0}^T \beta_2^{T-t}g_t^2} + D\max_{t}\abs{\beta_1^{T-t}g_t})$. 
  This bound strictly generalizes existing bounds in~\citet{AhnAdamisFTRL2024} and~\citet{AhnNeurIPS2024AdamisEffectiveNonConvex}.
  Our proof of this result is not based on clipping as in~\citet{AhnAdamisFTRL2024}, but relies on a generalization of the analysis for the Scale-Free FTRL algorithm~\citep{Orabona2015scalefreealgorithmsonlinelinear}.
  Furthermore, Theorem~\ref{theorem:tight} proves that the new generalized bound is tight.
  \item In Section~\ref{sec:beta1largerthansqrtbeta2}, we study Adam with $\beta_1 \geq \sqrt{\beta_2}$. 
  Theorem~\ref{theorem:RegretBoundPlargerthan1} shows that an exponentially decaying sequence of $\alpha_t$ leads to an upper bound of order $O(D\sqrt{\sum_{t=0}^T \beta_1^{2T}\beta_2^{-t}g_t^2} + D\max_{t}\abs{\beta_1^{T-t}g_t})$ for the $\beta_1$-discounted regret.
  Both this bound and the analysis also recover the existing bounds and analyses in~\citet{AhnAdamisFTRL2024} and~\citet{AhnNeurIPS2024AdamisEffectiveNonConvex} when $\beta_1 = \sqrt{\beta_2}$.
  \item While the results in~\Cref{sec:beta1smallerthansqrtbeta2,sec:beta1largerthansqrtbeta2} suggest that $\beta_1 = \sqrt{\beta_2}$ is optimal under an oblivious adversary, Theorem~\ref{theorem:notoptimal} in Section~\ref{sec:notoptimalNonObliviousAdversary} shows an instance where $\beta_1 = \sqrt{\beta_2}$ is provably sub-optimal compared to $\beta_1 < \sqrt{\beta_2}$. Our construction uses an non-oblivious adversary, indicating that optimally tuning $\beta_1, \beta_2$ strongly depends on the adversarial nature of the underlying environment, and that dynamically tuning the momentum factors might be beneficial.
\end{itemize}
\textbf{Notations.} For an integer $T$, we denote $[T] = \{1, 2, \dots, T\}$. 
For two quantities $f$ and $g$, we write $f \lesssim g, f \gtrsim g$ and $f \simeq g$ to denote $f = O(g), g = O(f)$ and $f = \Theta(g)$, respectively.

Throughout the paper, we write $p = \frac{\beta_1}{\sqrt{\beta_2}}, v_s = \beta_1^{-s}g_{s}$ and $D_T = \max_{t \in [T]}\abs{\Delta_t}$.
Note that $D_T \leq D$.
The regularized objective function in round $t$ of the FTRL algorithm is 
\begin{align}
    F_t(x) = \frac{1}{2\eta_t}x^2 + \sum_{s=0}^{t-1} \beta_1^{-s}g_s x = \frac{1}{2\eta_t}x^2 + \sum_{s=0}^{t-1} v_s x.
\end{align}
Note that $\Delta_t = \argmin_{x \in \gX} F_t(x) = {\rm clip}_D(-\eta_t \sum_{s=0}^{t-1}v_s)$.
Here, the clipping operation is ${\rm clip}(x) = x \min(\frac{D}{\abs{x}}, 1)$. 
Also, $F_{t+1}(x) = F_t(x) - \frac{1}{2\eta_t}x^2 + v_{t}x + \frac{1}{2\eta_{t+1}}x^2$.
The standard FTRL regret after $T$ rounds is
\begin{align}
  R_{T}(u) \defeq \sum_{t=1}^T v_t (\Delta_t - u) = \sum_{t=1}^T \beta_1^{-t} g_t(\Delta_t - u).
\end{align}
By definition, an upper bound on $R_T(u)$ implies an upper bound on the $\beta_1$-discounted regret $R_{T, \beta_1}(u)$, since $R_{T, \beta_1}(u) = \beta_1^T R_T(u)$. 
We can observe that $R_T(u)$ is the standard non-discounted regret of an online learning problem that predicts $\Delta_t$ and observes the loss $v_t$ in round $t$.
This learning procedure with non-discounted losses is in Algorithm~\ref{algo:AdamfromFTRLWrittenInV}, which is essentially Algorithm~\ref{algo:AdamfromFTRL} re-written in standard online learning notations.


\subsection{Related Works}
Due to the vast amount of related literature on Adam and Adam-like methods, we focus on the most relevant works on the online-to-nonconvex framework and on tuning $\beta_1$, $\beta_2$.
A more comprehensive review of Adam on a diverse range of topics such as implicit bias, separation from gradient descent, the SignDescent hypothesis and empirical performance, can be found in~\citet{Orvieto2025SearchAdamSecretSauce, JinICML2025AComprehensiveFrameworkAnalyzingConvergenceAdam, Vasudeva2025richsimpleimplicitbias} and references therein.

\textbf{Online-To-Nonconvex Optimization.} Viewing $\Delta_t$ as the output of an online learning algorithm was first considered by~\citet{CutkoskyICML2023Online2Nonconvex}, who showed that a small shifting regret of an online learner can be converted to an optimal convergence rate for finding a stationary points.
Instead of shifting regret, subsequent works~\citep{ZhangCutkoskyICML2024RandomScalingAndMomentum, AhnICML2025ScheduleFreeSGDEffectiveOnlineToNonConvex, AhnNeurIPS2024AdamisEffectiveNonConvex} considered $\beta$-discounted regret and showed that this notion of regret can also be used to bound the convergence rate of an optimization algorithm.
By using online mirror descent as the online learner,~\citet{ZhangCutkoskyICML2024RandomScalingAndMomentum, AhnICML2025ScheduleFreeSGDEffectiveOnlineToNonConvex} obtained variants of gradient descent with momentum with optimal guarantee for nonconvex nonsmooth optimization.
\citet{AhnAdamisFTRL2024, AhnNeurIPS2024AdamisEffectiveNonConvex} used FTRL as the online learner to derive Adam with $\beta_1 = \sqrt{\beta_2}$, and then proved its optimal convergence rates for nonconvex optimization.
These works also provided theoretical justifications for why $\beta_1$ and $\beta_2$ should be set close, but not exactly equal, to $1$.

\textbf{Setting momentum factors $\beta_1$ and $\beta_2$.}
Adam was original proposed by~\citet{KingmaAndBa2017adammethodstochasticoptimization}, who recommended several settings of the momentum factors such as $1-\beta_1 = \sqrt{1-\beta_2}$, $\beta_1 < \sqrt[4]{\beta_2}$ and $\beta_1 = 0.9, \beta_2 = 0.999$.
The convergence issues of Adam and its variants with $\beta_1 < \sqrt{\beta_2}$ are studied in~\citet{Reddi2018AdamNotConverge, Zhang2022AdamCanConverge, AlacaogluICML2020constantBeta1works}.
Recently,~\citet{Taniguchi2024adoptAnyBeta2} showed that a modified version of Adam can converge for any $\beta_2$.
\cite{Orvieto2025SearchAdamSecretSauce} conduced some of the most comprehensive empirical study on the effects of tuning $\beta_1, \beta_2$ and suggested that optimal tuning would require a strong correlation between $\beta_1$ and $\beta_2$.

\begin{algorithm}[t]
	\KwIn{$\beta_1, \beta_2 \in (0,1), (\alpha_t)_t > 0$, decision space $\gX \subseteq \R$}
  Receive $g_0 \neq 0$\\
	\For{$t = 1, \dots, T$}{        \;
    Compute $\eta_t = \alpha_t \frac{(\beta_1/\sqrt{\beta_2})^{t-1}}{\sqrt{\sum_{s=0}^{t-1} \beta_2^{-s}g_s^2}}$ \\
    Compute $\Delta_t = \argmin_{x \in \gX} \frac{1}{2\eta_t} x^2 + \sum_{s=0}^{t-1} \beta_1^{-s}g_s x$ \\     
    Receive $g_t \in \R$ and incur loss $\ell_t = \beta_1^{-t}g_t \Delta_t$\;
	}
	\caption{Adam as Follow-the-Regularized-Leader on discounted losses}
	\label{algo:AdamfromFTRL}
\end{algorithm}
\begin{algorithm}[t]
	\KwIn{$p \in (0,1), (\alpha_t)_t > 0$ where $\alpha_{t+1} \leq \alpha_t$, decision space $\gX \subseteq \R$}
  Receive $v_0 \neq 0$\\
	\For{$t = 1, \dots, T$}{        \;
    Compute $\eta_t = \alpha_t \frac{p^{t-1}}{\sqrt{\sum_{s=0}^{t-1} (p^sv_s)^2}}$ \\
    Compute $\Delta_t = \argmin_{x \in \gX} \frac{1}{2\eta_t} x^2 + \sum_{s=0}^{t-1} v_s x$ \\     
    Receive $v_t \in \R$ and incur loss $\ell_t = v_t \Delta_t$\;
	}
	\caption{Follow-the-Regularized-Leader on sequence of losses $(v_t)_t$}
	\label{algo:AdamfromFTRLWrittenInV}
\end{algorithm}

\section{Discounted Regret Bound of Adam with $\beta_1 \leq \sqrt{\beta_2}$}
\label{sec:beta1smallerthansqrtbeta2}
In this section, we derive a bound on the $\beta_1$-discounted regret $R_{T, \beta_1}(u)$ of Adam with for $\beta_1 \leq \sqrt{\beta_2}$ and constant $\alpha$.
First, in~\Cref{sec:generalAnalysisFixedAlpha}, we show an general analysis that hold for both bounded $\gX = [-D, D]$ and unbounded $\gX = \R$ domain, as well as both oblivious and non-oblivious adversary. 
Then, in~\Cref{sec:howtight}, we show that our new analysis and regret bound cannot be improved further.

\subsection{A More General Regret Analysis for both Bounded and Unbounded Domain}
\label{sec:generalAnalysisFixedAlpha}
Recall that $p = \frac{\beta_1}{\sqrt{\beta_2}}$ and $D_T = \max_{t \in [T]}\abs{\Delta_t}$.
We have $p \leq 1$ throughout this section.
The following theorem states the regret bound of~\Cref{algo:AdamfromFTRL} with $\beta_1 \leq \sqrt{\beta_2}$.
\begin{theorem}
  For any $T \geq 2, \beta_1 \leq \sqrt{\beta_2}$, any sequence $(\alpha_t)_t$ where $\alpha_{t+1} \leq \alpha_t$ and any sequence of $(g_t)_{t=0,\dots,T}$,~\Cref{algo:AdamfromFTRL} guarantees
  \begin{align}
    R_T(u) \leq \frac{u^2 }{\alpha_{T+1}\beta_1^T}\sqrt{\sum_{t=0}^T \beta_2^{T-t}g_t^2} + \frac{\sqrt{6\beta_2}}{2\beta_1}\left(\max_{t \in [T]}\frac{\alpha_t\sqrt{\beta_2^t}}{\beta_1^t}\right)\sqrt{\sum_{t=0}^T \beta_2^{-t}g_t^2} + 7D_T\max_{0 \leq t \leq T}\abs{\beta_1^{-t}g_t}.    
    \label{eq:generalbound}
  \end{align}
  \label{theorem:GeneralRegretBoundPlessthan1}
\end{theorem}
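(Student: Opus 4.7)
The plan is to adapt the scale-free FTRL analysis of \citet{Orabona2015scalefreealgorithmsonlinelinear} to the non-standard adaptive learning rate $\eta_t = \alpha_t\, p^{t-1}/\sqrt{\sum_{s<t}(p^s v_s)^2}$, whose extra $p^{t-1}$ factor (with $p = \beta_1/\sqrt{\beta_2} \leq 1$) has to be carried through every step. A preliminary observation is that $\eta_{t+1} \leq \eta_t$: all three of $\alpha_{t+1} \leq \alpha_t$, the decaying $p^{t-1}$, and the growing sum-of-squares denominator push $\eta$ downward; hence the regularizer $x^2/(2\eta_t)$ is non-decreasing in $t$, which is the condition that makes the standard time-varying FTRL telescoping go through.

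The first main step is the FTRL regret decomposition. Using the $1/\eta_t$-strong convexity of $F_t$, the optimality of $\Delta_t$, and $F_{T+1}(\Delta_{T+1}) \leq F_{T+1}(u)$, the standard telescoping yields
\begin{align*}
  R_T(u) \;\leq\; \frac{u^2}{2\eta_{T+1}} + \sum_{t=1}^T \left[v_t(\Delta_t - \Delta_{t+1}) - \frac{(\Delta_t-\Delta_{t+1})^2}{2\eta_t}\right] + \bigl(v_0 u - F_1(\Delta_1)\bigr) + (\text{iterate reg.~change}).
\end{align*}
Maximizing each bracket over $\Delta_t - \Delta_{t+1}$ gives the familiar per-round stability bound $\tfrac{\eta_t v_t^2}{2}$, while $v_0 u - F_1(\Delta_1)$ contributes a clean $O(D_T|v_0|)$ boundary residue. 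Substituting the explicit $\eta_{T+1}$ and using $(p^s v_s)^2 = \beta_2^{-s} g_s^2$ identifies $u^2/(2\eta_{T+1})$ with the first summand of~\Cref{eq:generalbound}, with the overall factor of $2$ appearing after folding the iterate-regularizer-change contributions $\sum_t(\tfrac{1}{2\eta_{t+1}}-\tfrac{1}{2\eta_t})\Delta_{t+1}^2$ back in via $\Delta_{t+1}^2 \leq D_T^2$.

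The second main step is to bound $\sum_t \eta_t v_t^2$. Setting $a_s := p^s v_s$, so that $v_t^2 = a_t^2/p^{2t}$,
\begin{align*}
  \sum_{t=1}^T \eta_t v_t^2 \;=\; \sum_{t=1}^T \frac{\alpha_t}{p^{t+1}} \cdot \frac{a_t^2}{\sqrt{\sum_{s<t} a_s^2}}.
\end{align*}
Pulling out $\max_t \alpha_t/p^{t+1} = \tfrac{1}{p}\max_t \alpha_t/p^t$, the residual sum $\sum_t a_t^2/\sqrt{\sum_{s<t} a_s^2}$ is controlled by the standard scale-free FTRL lemma of \citet{Orabona2015scalefreealgorithmsonlinelinear} by a constant multiple of $\sqrt{\sum_t a_t^2}$ plus an $|a_0|$ leading residue. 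Combining the prefactor $\tfrac{1}{p} = \sqrt{\beta_2}/\beta_1$ with the lemma's constant reproduces $\tfrac{\sqrt{6\beta_2}}{2\beta_1}$, and $\sqrt{\sum_t a_t^2} = \sqrt{\sum_t \beta_2^{-t} g_t^2}$ yields the second summand of~\Cref{eq:generalbound}.

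All remaining $|v_t|$-type residues — the boundary term $v_0 u - F_1(\Delta_1)$ from Step~1, the $|a_0|$ leading residue from the scale-free lemma, the iterate-regularizer-change slack, and any projection residue $v_t(\Delta_t - \Delta_{t+1})$ at $\partial\gX$ — are each bounded by a constant multiple of $D_T \max_{0\leq t\leq T}|\beta_1^{-t} g_t|$, and tracking constants yields the final $7 D_T \max_t |\beta_1^{-t} g_t|$ term. The main obstacle I anticipate is the boundary analysis in Step~1: when the unconstrained FTRL minimizer of $F_{t+1}$ leaves $\gX = [-D,D]$ and is projected back, the clean identity $\Delta_{t+1} = \Delta_t - \eta_{t+1} v_t$ fails and one must control the per-round stability by a more delicate two-case argument trading $\eta_t v_t^2$ against $D_T|v_t|$. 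This is exactly where $D_T$ (not $D$) enters naturally and where the analysis becomes domain-uniform and clipping-free — the key point of departure from the proof of~\citet{AhnAdamisFTRL2024}.
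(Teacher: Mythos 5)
Your skeleton (monotonicity of $\eta_t$, the FTRL telescoping, the per-round stability bound $\eta_t v_t^2/2$) matches the paper's proof, but Step~2 contains a genuine gap. You claim that the residual sum $\sum_{t=1}^T a_t^2/\sqrt{\sum_{s<t}a_s^2}$ with $a_s = p^s v_s$ is controlled by a constant multiple of $\sqrt{\sum_t a_t^2}$ plus an $|a_0|$ residue. This is false without a $\min$ against a diameter term: take $a_0 = 1$ and $a_1 = M$ large; the $t=1$ summand is $M^2$ while $\sqrt{\sum_t a_t^2}$ is of order $M$. The scale-free lemma of \citet{Orabona2015scalefreealgorithmsonlinelinear} bounds $\sum_t \min\{a_t^2/\sqrt{\sum_{s<t}a_s^2},\, C a_t\}$, and the $7D_T\max_t\abs{\beta_1^{-t}g_t}$ term in \Cref{eq:generalbound} is generated precisely by the branch of that $\min$ in which the current discounted gradient dominates the accumulated history --- not, as you suggest, by boundary/projection residues or by the $v_0 u - F_1(\Delta_1)$ term. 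In the paper, both bounds $F_t(\Delta_t)-F_{t+1}(\Delta_{t+1})+v_t\Delta_t \le \eta_t v_t^2/2$ and $\le 2D_T\abs{v_t}$ are established for every round (projected or not), and the case split is on whether $p^{t-1}\abs{v_t} \ge \sqrt{2}\sqrt{\sum_{s<t}(p^sv_s)^2}$.

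A second, related obstruction would remain even after inserting the $\min$: the two branches live at different scales. The stability branch is naturally a function of $a_t = p^t v_t$ (it telescopes $\sqrt{\sum_{s\le t}a_s^2}$), whereas the diameter branch is $2D_T\abs{v_t}$, a function of $v_t$ rather than $a_t$, so the off-the-shelf scale-free lemma with a single sequence does not apply. The paper's resolution --- the actual technical content of the theorem --- is an asymmetric case split on $p^{t-1}\abs{v_t}$: in the large case one shows $2D_T\abs{v_t} \le 7D_T(\max_{s\le t}\abs{v_s} - \max_{s\le t-1}\abs{v_s})$ using $\sum_{s<t}(p^sv_s)^2 \ge p^{2(t-1)}\sum_{s<t}v_s^2$ (valid since $p\le 1$), and in the small case one uses $p^t\abs{v_t}\le p^{t-1}\abs{v_t}$ to telescope $\sqrt{\sum_{s\le t}(p^sv_s)^2}$ after extracting $\max_{t}\alpha_t/p^{t}$, which is how the factor $\frac{\sqrt{6\beta_2}}{2\beta_1}\max_t \alpha_t\sqrt{\beta_2^t}/\beta_1^t$ arises. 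Your proposal supplies neither of these $p$-dependent conversions, so as written it does not yield \Cref{eq:generalbound}.
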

\looseness=-1 Before showing the proof of~\Cref{theorem:GeneralRegretBoundPlessthan1}, we discuss the generality of the bound in~\Cref{eq:generalbound} in comparison to existing bound in~\citet{AhnAdamisFTRL2024,AhnNeurIPS2024AdamisEffectiveNonConvex}. 
Letting $(\alpha_t)_t = \alpha$ be constant and noting that $\max_{t \in [T]}\frac{\sqrt{\beta_2^t}}{\beta_1^t} = \max_{t \in [T]} \frac{1}{p^t} = \frac{1}{p^T}$ for $p \leq 1$, we obtain the following result.
\begin{corollary}
  Let $\alpha > 0$ be a constant. For any $T \geq 2, \beta_1 \leq \sqrt{\beta_2}$ and any sequence of $(g_t)_{t=0,\dots,T}$,~\Cref{algo:AdamfromFTRL} with $\alpha_t = \alpha$ guarantees
  \begin{align}
    R_T(u) \leq \left(\frac{u^2 }{\alpha} + \frac{\alpha\sqrt{6\beta_2}}{2\beta_1}\right)\frac{1}{\beta_1^T}\sqrt{\sum_{t=0}^T \beta_2^{T-t}g_t^2} + 7D_T\max_{0 \leq t \leq T}\abs{\beta_1^{-t}g_t}.
    \label{eq:boundWithConstantAlpha}
  \end{align}
  \label{corolary:constantAlpha}
\end{corollary}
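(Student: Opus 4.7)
The plan is to obtain Corollary~\ref{corolary:constantAlpha} as a direct specialization of Theorem~\ref{theorem:GeneralRegretBoundPlessthan1}: substitute $\alpha_t = \alpha$, simplify the maximum in the second term using $p = \beta_1/\sqrt{\beta_2} \leq 1$, and then factor out a common prefactor from the first two terms.

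First I would instantiate the general bound with the constant sequence $\alpha_t = \alpha$. The first term becomes $\frac{u^2}{\alpha\beta_1^T}\sqrt{\sum_{t=0}^T \beta_2^{T-t}g_t^2}$ since $\alpha_{T+1} = \alpha$. For the second term, I would pull the constant $\alpha$ outside the maximum, leaving $\max_{t \in [T]}\frac{\sqrt{\beta_2^t}}{\beta_1^t} = \max_{t \in [T]} p^{-t}$. Under the hypothesis $\beta_1 \leq \sqrt{\beta_2}$ we have $p \leq 1$, so $p^{-t}$ is non-decreasing in $t$ and the maximum is attained at $t = T$, giving $\max_{t \in [T]} p^{-t} = \frac{\sqrt{\beta_2^T}}{\beta_1^T}$.

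Next I would absorb the resulting $\sqrt{\beta_2^T}$ factor into the square root, using $\beta_2^T \cdot \beta_2^{-t} = \beta_2^{T-t}$, which converts the second term into
\begin{align*}
\frac{\alpha\sqrt{6\beta_2}}{2\beta_1}\cdot\frac{\sqrt{\beta_2^T}}{\beta_1^T}\sqrt{\sum_{t=0}^T \beta_2^{-t}g_t^2} \;=\; \frac{\alpha\sqrt{6\beta_2}}{2\beta_1}\cdot\frac{1}{\beta_1^T}\sqrt{\sum_{t=0}^T \beta_2^{T-t}g_t^2}.
\end{align*}
At this point both leading terms share the common factor $\frac{1}{\beta_1^T}\sqrt{\sum_{t=0}^T \beta_2^{T-t}g_t^2}$, so factoring yields $\bigl(\frac{u^2}{\alpha} + \frac{\alpha\sqrt{6\beta_2}}{2\beta_1}\bigr)\cdot\frac{1}{\beta_1^T}\sqrt{\sum_{t=0}^T \beta_2^{T-t}g_t^2}$, while the clipping-style term $7D_T\max_{0 \leq t \leq T}\abs{\beta_1^{-t}g_t}$ carries over untouched.

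There is no real obstacle here; the argument is pure bookkeeping given Theorem~\ref{theorem:GeneralRegretBoundPlessthan1}. The only substantive step is the monotonicity observation $\max_{t \in [T]} p^{-t} = p^{-T}$ under $p \leq 1$, which is exactly what makes the constant-$\alpha$ case collapse into a clean, closed-form bound and is precisely why the threshold $\beta_1 = \sqrt{\beta_2}$ is the natural boundary for this regime.
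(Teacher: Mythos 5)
Your proposal is correct and matches the paper's own derivation exactly: the paper obtains the corollary from Theorem~\ref{theorem:GeneralRegretBoundPlessthan1} by setting $\alpha_t = \alpha$ (which trivially satisfies the non-increasing hypothesis) and noting $\max_{t \in [T]} p^{-t} = p^{-T} = \sqrt{\beta_2^T}/\beta_1^T$ for $p \leq 1$, then absorbing $\sqrt{\beta_2^T}$ into the square root. No gaps.
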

\begin{remark}
      Multiplying $\beta_1^T$ on both sides of~\Cref{eq:boundWithConstantAlpha} results in the following bound for the $\beta_1$-discounted regret:
    \begin{align}
        R_{T, \beta_1}(u) = \beta_1^T R_T(u) \leq \left(\frac{u^2}{\alpha} + \frac{\alpha \sqrt{6\beta_2}}{2\beta_1}\right) \sqrt{\sum_{t=0}^T \beta_2^{T-t}g_t^2} + 7D_T \max_{0 \leq t \leq T} \abs{\beta_1^{T-t}g_t}.
    \end{align}
    Ignoring the (small) constant factors, setting $\beta_2 = \beta_1^2$ leads to the bound in~\citet[][Theorem B.2]{AhnAdamisFTRL2024}.
    In addition, on the bounded domain $\gX = [-D, D]$, setting $\beta_2 = \beta_1^2$, $\abs{u} = D$ and $\alpha = \Theta(D)$ recovers the $O(D\sqrt{\sum_{t=1}^T \beta_1^{T-t}g_t^2})$ bound in~\citet[][Theorem 9]{AhnNeurIPS2024AdamisEffectiveNonConvex}.
    \label{remark3}
\end{remark}
\begin{remark}
  Under an oblivious adversary, i.e. when the sequence $(g_t)_t$ is fixed regardless of the choice of $\beta_2$, choosing $\beta_2 = \beta_1^2$ is optimal for the regret bound in~\Cref{eq:boundWithConstantAlpha}.
  This follows from the fact that $T - t \geq 0$ and thus, $\beta_2^{T-t} \geq \beta_1^{2(T-t)}$ for all $t \in [T]$.
\end{remark}

\subsection{Proof of Theorem~\ref{theorem:GeneralRegretBoundPlessthan1}}
As mentioned above, ~\Cref{algo:AdamfromFTRLWrittenInV} is exactly~\Cref{algo:AdamfromFTRL} written in standard online learning notations.
Hence, we will focus on analyzing the regret $R_T(u) = \sum_{t=1}^T v_t(\Delta_t - u)$ of the FTRL with time-varying learning rate in~\Cref{algo:AdamfromFTRLWrittenInV}.
Our proof is essentially a more general version of the proof for the Scale-Free FTRL algorithm in~\citet{Orabona2015scalefreealgorithmsonlinelinear}.
We first give a proof sketch that highlights the main steps of the analysis, and then give the full proof.

\noindent
\textbf{Proof Sketch.}
Our first step is ensure that the sequence of learning rates $(\eta_t)_t$ is non-increasing, which indeed follows from the fact that $p \leq 1$ and $\eta_{t+1} \leq \eta_t$.
Having a non-increasing sequence of learning rates, we can employ the standard FTRL analysis~\citep{OrabonaIntroToOnlineLearningBook} and obtain
\begin{align*}
  R_T(u) \leq \frac{u^2 \sqrt{\sum_{t=1}^T \beta_2^{-t}g_t^2}}{\alpha_{T+1} p^T} + \sum_{t=1}^T \min\left\{\underbrace{\frac{\alpha_t}{2} \frac{p^{t-1} v_t^2}{\sqrt{\sum_{s=0}^{t-1} (p^s v_s)^2}}}_{(a)}, \underbrace{2D_T \abs{v_t}}_{(b)}\right\}.
\end{align*}
Observe that if $p = 1$, then $(a)$ would recover the well-known quantity $\sum_{t=1}^T \frac{v_t^2}{\sqrt{\sum_{s=0}^{t-1}v_s^2}}$ that frequently appears in the analysis of FTRL with adaptive learning rates~\citep[e.g.][]{ItoCOLT2024} and Scale-Free FTRL in particular~\citep{Orabona2015scalefreealgorithmsonlinelinear,AhnAdamisFTRL2024}.
Existing works proceed by considering two different cases on the magnitude of $\abs{v_t}$, thereby creating two different telescopic sums.
If $\abs{v_t}$ is large, then the first term $(a)$ can be very large and we resort to bounding $(b) \lesssim O(D_T \max_{s \leq t} \abs{v_s} - \max_{s \leq t-1} \abs{v_s} )$.
On the other hand, if $\abs{v_t}$ is small then we can bound $(a) \leq O(\sqrt{\sum_{s \leq t} v_s^2} - \sqrt{\sum_{s \leq t-1}v_s^2})$.

In our more general case of $p \leq 1$, we need a more fine-grained analysis.
Instead of $\abs{v_t}$, we examine $p^{t-1}\abs{v_t}$. 
In the first case where $p^{t-1}\abs{v_t}$ is large, it turns out that we still have $(b) \lesssim O(D_T \max_{s \leq t} \abs{v_s} - \max_{s \leq t-1} \abs{v_s} )$ for $p \leq 1$.
On the contrary, when $p^{t-1}\abs{v_t}$ is small then $(p^tv_t)^2$ is small, which allows us to bound $(a) \leq O(\sqrt{\sum_{s \leq t} (p^sv_s)^2} - \sqrt{\sum_{s \leq t-1}(p^sv_s)^2})$.

\begin{proofof}{\Cref{theorem:GeneralRegretBoundPlessthan1}}
The standard FTRL analysis~\citep[e.g.][Lemma 7.1]{OrabonaIntroToOnlineLearningBook} shows that
\begin{align}
    R_T(u) &= \frac{u^2}{\eta_{T+1}} + F_{T+1}(\Delta_{T+1}) - F_{T+1}(u) + \sum_{t=1}^{T} F_t(\Delta_t) - F_{t+1}(\Delta_{t+1}) + v_{t}\Delta_t \\
    &\leq \frac{u^2}{\eta_{T+1}} + \sum_{t=1}^{T} F_t(\Delta_t) - F_{t+1}(\Delta_{t+1}) + v_{t}\Delta_t,
    \label{eq:boundRTuFirstStep}
\end{align}
where we used $F_{T+1}(\Delta_{T+1}) - F_{T+1}(u) \leq 0$ in the inequality.
From here, we will bound $F_t(\Delta_t) - F_{t+1}(\Delta_{t+1}) + v_{t}\Delta_t$ by two different ways and then take their minimum.
First, we have 
\begin{nalign}
    &F_t(\Delta_t) - F_{t+1}(\Delta_{t+1}) + v_{t}\Delta_t \\
    &=  F_t(\Delta_t) - (F_t(\Delta_{t+1}) - \frac{1}{2\eta_t}\Delta_{t+1}^2 + v_{t}\Delta_{t+1} + \frac{1}{2\eta_{t+1}}\Delta_{t+1}^2) + v_{t}\Delta_t \\
    &= F_t(\Delta_t) - F_{t}(\Delta_{t+1}) + (\frac{1}{\eta_t} - \frac{1}{\eta_{t+1}})\Delta_{t+1}^2 + v_{t}(\Delta_{t} - \Delta_{t+1}) \\
    &\leq v_{t}(\Delta_{t} - \Delta_{t+1}) \leq 2D_T\abs{v_{t}},
    \label{eq:boundEachTimeStepFirstWay}
\end{nalign}
where the first inequality is due to the following facts:
\begin{itemize}
    \item For $p \leq 1$ and $\alpha_{t+1} \leq \alpha_t$, we have $\eta_{t+1} = \alpha_{t+1} \frac{p^{t}}{\sqrt{\sum_{s=0}^{t} \beta_2^{-s}g_s^2}} \leq \alpha_t \frac{p^{t-1}}{\sqrt{\sum_{s=0}^{t-1} \beta_2^{-s}g_s^2}} = \eta_t$. In other words, the sequence of learning rates $(\eta_t)_t$ is non-increasing.
    \item $F_t(\Delta_t) \leq F_t(\Delta_{t+1})$ by the definition of $\Delta_t$.
\end{itemize}
Next, using either a local-norm analysis as in~\citet[][]{OrabonaIntroToOnlineLearningBook} or the strong convexity of $F_t$, we obtain 
\begin{align}
    F_t(\Delta_t) - F_{t}(\Delta_{t+1}) + v_{t}(\Delta_{t} - \Delta_{t+1})
    &\leq \frac{\eta_t v_{t}^2}{2}.
    \label{eq:boundEachTimeStepSecondWay}
\end{align}
Combining~\Cref{eq:boundEachTimeStepFirstWay} and~\Cref{eq:boundEachTimeStepSecondWay} leads to
\begin{align*}
  F_t(\Delta_t) - F_{t+1}(\Delta_{t+1}) + v_{t}\Delta_t \leq \min\left\{\frac{\eta_t v_{t}^2}{2},  2D_T\abs{v_{t}}\right\}.
\end{align*}
Plugging this into~\Cref{eq:boundRTuFirstStep} and expanding the definition of $\eta_t$, we obtain
\begin{nalign}
     R_T(u) &\leq \frac{u^2}{\eta_{T+1}} + \sum_{t=1}^{T} \min\left\{\frac{\eta_t v_{t}^2}{2},  2D_T\abs{v_{t}}\right\} = \frac{u^2 \sqrt{\sum_{t=1}^T \beta_2^{-t}g_t^2}}{\alpha_{T+1} p^T} + \sum_{t=1}^{T} \min\left\{\frac{\eta_t v_{t}^2}{2},  2D_T\abs{v_{t}}\right\} \\
     &= \frac{u^2 \sqrt{\sum_{t=0}^T \beta_2^{-t}g_t^2}}{\alpha_{T+1} p^T} + \sum_{t=1}^T \min\left\{\frac{\alpha_t}{2} \frac{p^{t-1} v_t^2}{\sqrt{\sum_{s=0}^{t-1} (p^s v_s)^2}}, 2D_T \abs{v_t}\right\}.
    \label{eq:boundRTuSecondStep}
\end{nalign}
We consider two cases.
First, if $p^{t-1} \abs{v_{t}} \geq \sqrt{2}\sqrt{\sum_{s=0}^{t-1} (p^s v_s)^2}$, then 
\begin{align}
    2D_T \abs{v_t} &= 2D_T \frac{\sqrt{2}\abs{v_t} - \abs{v_t}}{\sqrt{2}-1} \leq 2D_T \frac{\sqrt{2}(\abs{v_t}- \frac{\sqrt{\sum_{s=1}^{t-1} (p^s v_s)^2}}{p^{t-1}})}{\sqrt{2}-1} \\
    &\leq 7D_T(\abs{v_t} - \sqrt{\sum_{s=0}^{t-1} v_s^2}) \leq 7D_T(\max_{s \leq t}\abs{v_s} - \max_{s \leq t-1}\abs{v_s}),
    \label{eq:boundMinSecondTerm}
\end{align}
where the the second inequality is due to $\sum_{s=1}^{t-1} (p^s v_s)^2 \geq (p^{t-1})^2 \sum_{s=1}^{t-1}v_s^2$ for $p \leq 1$.

On the other hand, if $p^{t-1} \abs{v_t} < \sqrt{2}\sqrt{\sum_{s=0}^{t-1} (p^s v_s)^2}$, then we have
\begin{align}
    \frac{p^{t-1} v_t^2}{\sqrt{\sum_{s=0}^{t-1}(p^s v_s)^2}} &= \frac{\sqrt{3}}{p^{t+1}} \frac{(p^tv_t)^2}{\sqrt{3\sum_{s=0}^{t-1}(p^s v_s)^2}} \leq \frac{\sqrt{3}}{p^{t+1}} \frac{(p^tv_t)^2}{\sqrt{(p^tv_t)^2 + \sum_{s=1}^{t-1}(p^s v_s)^2}} \\
    &\leq \frac{\sqrt{6}}{p^{t+1}} \left( \sqrt{\sum_{s=0}^{t}(p^s v_s)^2} - \sqrt{\sum_{s=0}^{t-1}(p^s v_s)^2} \right),   
\end{align}
where we used $p^t\abs{v_t} \leq p^{t-1}\abs{v_t}$ in the first inequality and $\frac{x^2}{\sqrt{x^2 + y}} \leq \sqrt{2}(\sqrt{x^2 + y} - \sqrt{y})$ for all $x \in \R, y \geq 0$ in the second inequality.
Consequently,
\begin{align}
  \alpha_t \frac{p^{t-1} v_t^2}{\sqrt{\sum_{s=0}^{t-1}(p^s v_s)^2}} &\leq \frac{\alpha_t\sqrt{6}}{p^{t+1}} \left( \sqrt{\sum_{s=0}^{t}(p^s v_s)^2} - \sqrt{\sum_{s=0}^{t-1}(p^s v_s)^2} \right)  \\
  &\leq \frac{\sqrt{6}}{p} \left(\max_{t \in [T]}\frac{\alpha_t}{p^t}\right) \left( \sqrt{\sum_{s=0}^{t}(p^s v_s)^2} - \sqrt{\sum_{s=0}^{t-1}(p^s v_s)^2} \right)
  \label{eq:boundMinFirstTerm}
\end{align}

Plugging~\Cref{eq:boundMinSecondTerm,eq:boundMinFirstTerm} into~\Cref{eq:boundRTuSecondStep} and summing over $T$ rounds, we obtain
\begin{align}
    R_T(u) \leq \left(\frac{u^2}{\alpha_{T+1} p^T} + \frac{\sqrt{6}}{2p} \left(\max_{t \in [T]}\frac{\alpha_t}{p^t}\right) \right)\sqrt{\sum_{t=0}^T (p^tv_t)^2} + 7D_T \max_{0 \leq t \leq T} \abs{v_t}.
\end{align}
Using $p = \frac{\beta_1}{\sqrt{\beta_2}}$ and $v_t = \beta_1^{-t}g_t$ leads to the desired statement in~\Cref{eq:generalbound}.
\end{proofof}

\subsection{The generalized regret bound is tight}
\label{sec:howtight}
Consider the bounded domain $\gX = [-D, D]$.
We rewrite the order of the generalized regret bound in Corollary~\ref{corolary:constantAlpha} as a function of $p$ and $(v_t)_t$:
\begin{align}
  R_T(u) \lesssim B_{T, \alpha, p, (v_t)_t}(u) \defeq \left(\frac{u^2}{\alpha} + \frac{\alpha}{p}\right) \frac{1}{p^T} \sqrt{\sum_{t=0}^T (p^t g_t)^2} + D\max_{0 \leq t \leq T}\abs{v_t}.
  \label{eq:Bformula}
\end{align}
Recall that this bound holds for \emph{any} choices of $0 < p \leq 1, u \in \gX, \alpha > 0$ and $(v_t)_t$. 
The following theorem states that this bound is tight and cannot be significantly improved further. 

\begin{theorem}
  Let $D > 0$ and $0.4 \leq p \leq 0.6$ be arbitrary universal constants. 
  On the bounded domain $\gX = [-D, D]$, there exists a sequence $(v_t)_t$ where $v_t > 0$ for all $t \in [T]$, such that with $\alpha_t = \alpha = \frac{D}{4}$ and $u = -D$, the sequence of $(\Delta_t)_t$ produced by Algorithm~\ref{algo:AdamfromFTRLWrittenInV} satisfies $R_T(u) = \Omega(B_{T, \alpha, p, (v_t)_t}(u))$.
  \label{theorem:tight}
\end{theorem}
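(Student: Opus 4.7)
The plan is to exhibit one explicit adversarial sequence that saturates $B$ up to a universal constant. Since the $D\max_t\abs{v_t}$ term of $B$ is always dominated (up to a bounded factor) by the $\sqrt{\sum_t(p^tv_t)^2}/p^T$ term, and the latter is maximized per unit total loss when a single coordinate dominates the sum, the natural construction is a small constant pre-sequence with a large final spike, calibrated so the FTRL iterates $\Delta_t$ never get pushed onto the lower boundary $-D$.

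\textbf{Construction.} Take
\[
v_0 = v_1 = \cdots = v_{T-1} = p^T, \qquad v_T = 1,
\]
which clearly satisfies $v_t > 0$ for all $t \in [T]$.

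\textbf{Key estimate: $\Delta_T$ stays well inside $(-D, 0)$.} Because $v_0,\ldots,v_{T-1}$ are all equal, the common value cancels out of the product $\eta_T\sum_{s=0}^{T-1} v_s$, leaving
\[
\abs{\Delta_T} \;=\; \eta_T\sum_{s=0}^{T-1} v_s \;=\; \alpha\, T\, p^{T-1}\sqrt{\frac{1-p^2}{1-p^{2T}}}.
\]
An elementary calculus argument gives $Tp^{T-1} \leq 1/(ep\abs{\ln p})$ for every $T \geq 1$, which is numerically $\leq 1.2$ over $p \in [0.4, 0.6]$, while $\sqrt{(1-p^2)/(1-p^{2T})} \leq 1$. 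Combined with $\alpha = D/4$, this yields $\abs{\Delta_T} \leq 0.3\,D$; in particular the clipping is inactive at every round and $\Delta_T + D \geq 0.7\,D$.

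\textbf{Lower and upper bounds.} Since every $v_t > 0$ and $\Delta_t \geq -D$, every summand of $R_T(-D) = \sum_{t=1}^T v_t(\Delta_t + D)$ is nonnegative, so dropping all but the last gives $R_T(-D) \geq v_T(\Delta_T + D) \geq 0.7\, D$. For the upper bound on $B$, a direct calculation gives
\[
\sum_{t=0}^T (p^tv_t)^2 \;=\; p^{2T}\!\left(1 + \frac{1-p^{2T}}{1-p^2}\right) \;\leq\; \frac{2-p^2}{1-p^2}\,p^{2T},
\]
so $p^{-T}\sqrt{\sum_{t=0}^T(p^tv_t)^2} \leq \sqrt{(2-p^2)/(1-p^2)} \leq 1.6$ on $p \in [0.4,0.6]$. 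Since $u^2/\alpha + \alpha/p \leq 4.625\,D$ and $\max_t\abs{v_t}=1$, we obtain $B \leq 4.625D\cdot 1.6 + D \leq 9D$. Combining the two bounds gives $R_T(-D) \geq 0.7D \geq B/13$, i.e.\ $R_T(-D) = \Omega(B)$ with a constant depending only on $p \in [0.4,0.6]$.

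\textbf{Main obstacle.} The one delicate step is the uniform bound on $\abs{\Delta_T}$: the whole construction rests on $\Delta_T$ remaining strictly inside $(-D, 0)$, because the decisive last term $v_T(\Delta_T + D)$ is the only contribution lower-bounding $R_T$. If $p$ were close to $1$, then $Tp^{T-1}$ could blow up with $T$, $\Delta_T$ could get clipped to $-D$, and this term would collapse to zero. The hypothesis $p \in [0.4, 0.6]$ in the theorem is exactly what keeps $\max_T Tp^{T-1}$ a small, $T$-independent constant and lets the argument go through uniformly.
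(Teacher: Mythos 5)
Your proof is correct, but it takes a genuinely different route from the paper's. The paper picks a growing geometric sequence $v_t=\kappa^t v_0$ with $\kappa\geq 1/p^2$, shows via two technical lemmas that every iterate satisfies $\abs{\Delta_t}\leq D/2$, and then lower-bounds the regret by summing \emph{all} $T$ per-round terms, obtaining $R_T(-D)\gtrsim v_0D\kappa^T$, which it matches against both terms of $B\simeq Dv_0\kappa^T$. You instead use a flat pre-sequence with a single terminal spike, keep only the last summand in the lower bound, and reduce everything to explicit universal constants; the uniform bound $Tp^{T-1}\leq 1/(ep\abs{\ln p})$ plays the role of the paper's Lemmas A.1--A.2 in keeping the iterate off the boundary. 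Your arithmetic checks out (the cancellation giving $\abs{\bar\Delta_T}=\alpha Tp^{T-1}\sqrt{(1-p^2)/(1-p^{2T})}$ is right, and the same formula with $t$ in place of $T$ shows no round is clipped; the constants $1.2$ and $1.6$ are off by about $10^{-3}$ from the true suprema but this is immaterial). The one substantive difference worth noting is what each construction demonstrates: in your instance both $R_T$ and $B$ are $\Theta(D)$ with constants independent of $T$, so you verify the theorem as literally stated but only in the regime where the upper bound $B$ is itself a constant; the paper's instance lets $B$ grow like $\kappa^T$ and shows the regret tracks it, which is a somewhat stronger exhibition of tightness of the \emph{leading} term of the bound. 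Both are valid witnesses for the $\Omega(\cdot)$ claim as written.
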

\begin{proof}
  Let $\kappa > 0$ be a constant. 
We select a constant $v_0 > 0$ arbitrarily and define a sequence $v_t = \kappa^t v_0$ for $t = 1, 2, \dots, T$.
Under this sequence $(v_t)_t$, the learning rates are
\begin{align}
  \eta_t &= \alpha\frac{p^{t-1}}{\sqrt{\sum_{s=0}^{t-1} (p^s v_s)^2}} = \alpha\frac{p^{t-1}}{\sqrt{\sum_{s=0}^{t-1} (p^s \kappa^s v_0)^2}} =  \alpha \frac{p^{t-1}}{v_0\sqrt{\sum_{s=0}^{t-1}(p^2\kappa^2)^s}}
  = \alpha \frac{p^{t-1}}{ v_0} \sqrt{\frac{p^2\kappa^2-1}{(p^2\kappa^2)^t-1}}.
\end{align}
The pre-clipping update $\bar{\Delta}_t$ in each round is
\begin{align}
  \bar{\Delta}_t &= -\eta_t \sum_{s=0}^{t-1} v_s = -\eta_t \sum_{s=0}^{t-1}  \kappa^s v_0 = -v_0 \eta_t \frac{\kappa^t-1}{\kappa-1} = -\alpha p^{t-1} \frac{\kappa^t-1}{\kappa-1}\sqrt{\frac{p^2\kappa^2-1}{(p^2\kappa^2)^t-1}}.
  \label{eq:barDeltatformula}
\end{align}
\looseness=-1 Note that this formula of $\bar{\Delta}_t$ holds for any $(\alpha_t)_t = \alpha$ and $\kappa > 0$.

Next, we further require $\kappa \geq \frac{1}{p^2}$. 
The following lemma shows that $\abs{\bar{\Delta}_t} < D$, and thus the clipping operation is never used and can  be safely ignored during our analysis.
\begin{lemma}
  For any $p \in (0, 0.6]$ and $\kappa \geq \frac{1}{p^2}$, we have $\abs{\bar{\Delta}_t} \leq \frac{D}{2}$ holds for all $t \in [T]$.
  \label{lemma:noclippingInTightnessProof}
\end{lemma}
\begin{proof}
  We have $\kappa > 1$ in this case. 
  From~\Cref{eq:barDeltatformula}, we re-write the pre-clipping update as 
  \begin{align*}
    \bar{\Delta}_t = -\alpha \frac{\sqrt{p^2\kappa^2-1}}{p (\kappa-1)} \frac{p^t (\kappa^t - 1)}{\sqrt{(p^2\kappa^2)^t-1}}.
  \end{align*}
  Our proof makes uses of two technical lemmas, Lemma~\ref{lemma:technicalLemma1} and Lemma~\ref{lemma:technicalLemma2}. 
  By Lemma~\ref{lemma:technicalLemma1}, we have $\frac{p^t (\kappa^t - 1)}{\sqrt{(p^2\kappa^2)^t-1}} \leq 1$ for all $\kappa \geq \frac{1}{p^2}$. 
  By Lemma~\ref{lemma:technicalLemma2}, we have $\frac{\sqrt{p^2\kappa^2-1}}{p (\kappa-1)} \leq 2$.
  Therefore,
  $\abs{\bar{\Delta}_t} =  \alpha \frac{\sqrt{p^2\kappa^2-1}}{p (\kappa-1)} \frac{p^t (\kappa^t - 1)}{\sqrt{(p^2\kappa^2)^t-1}} \leq 2\alpha = \frac{D}{2}$.
\end{proof}
Lemma~\ref{lemma:noclippingInTightnessProof} implies that for every round $t$, we have $\Delta_t = \bar{\Delta}_t$.
Moreover, $\Delta_t \geq -\frac{D}{2}$. 
We can then compute a lower bound for the regret $R_T(-D)$.
For $u = -D$, the regret is lower bounded by
\begin{align}
  R_T(-D) = \sum_{t=1}^T v_t(\Delta_t + D) \geq \sum_{t=1}^T v_t (D - \frac{D}{2}) = \frac{v_0 D}{2} \sum_{t=1}^T \kappa^t = v_0 D \kappa \frac{ \kappa^T - 1}{\kappa-1},
  \label{eq:RTminusD}
\end{align}
where the first inequality used $v_t > 0$ and $\Delta_t \geq \frac{-D}{2}$.

Finally, we compute (the order of) the upper bound $B_{T, \alpha, p, (v_t)_t}(-D)$.
Note that $\max_{0 \leq t T}\abs{v_t} = \max_{0 \leq t T}v_0{\kappa^t} = v_0 \kappa^T$. 
With $p \geq 0.4$,~\Cref{eq:Bformula} implies that the regret bound is of order 
\begin{nalign}
 B_{T, \alpha, p, (v_t)_t}(-D) &\simeq (\frac{u^2}{\alpha} + \frac{\alpha}{p}) \frac{1}{p^T} \sqrt{\sum_{t=0}^T (p^t v_t)^2} + D\max_{t \in [0,T]}\abs{v_t} \\
 &\simeq Dv_0 \left(\kappa^T + \frac{1}{p^T}\sqrt{\sum_{t=0}^T (p^2\kappa^2)^{t}}\right) = Dv_0\left(\kappa^T + \frac{1}{p^T}\sqrt{\frac{(p^2\kappa^2)^{T} - 1}{p^2\kappa^2 - 1}}\right).
\end{nalign}
By~\Cref{eq:RTminusD}, $R_T(-D) = \Omega(v_0 D (\kappa^T - 1))$.
From the fact that $\kappa^T - 1 = \Omega(\kappa^T)$ and $\kappa^T = \Omega(\frac{1}{p^T}\sqrt{(p^2\kappa^2)^T-1})$  for $T \geq 2$ and $\kappa > 1$,
we conclude that $R_T(-D) = \Omega(B_{T, \alpha, p, (v_t)_t}(-D))$.
\end{proof}

\begin{remark}
\looseness=-1  An important point of clarification is that, our upper and lower bounds do \emph{not} suggest $\beta_1 = \sqrt{\beta_2}$ is uniformly better than on every problem instance. 
  Under an oblivious adversary, the optimality of setting $\beta_1 = \sqrt{\beta_2}$ comes from minimizing \emph{the regret bound} in Remark~\ref{remark3}, not from minimizing \emph{the regret itself}. In other words, if the problem admits a loss sequence specified in our Theorem~\ref{theorem:tight}, then choosing $\beta_1 = \sqrt{\beta_2}$ is always going to be better than $\beta_1 < \sqrt{\beta_2}$. This, however, does not exclude other loss sequences (still from an oblivious adversary) where setting $\beta_1 < \sqrt{\beta_2}$ is actually better, because the regret upper bound may be loose in that case, and the actual regret might be smaller.
  \label{remark7}
\end{remark}
\begin{remark}
  The worst-case lower bound in our Theorem~\ref{theorem:tight} is algorithm-dependent. This leaves the question of whether there are other algorithms that may achieve better worst-case or problem-dependent regret bounds. For example, an algorithm whose regret depends on the total variation of the sequence of losses may significantly outperform Adam when the loss sequences (i.e. the gradients) are not changing much (i.e. in lazy-training regime where the parameters of a model do not deviate much from the initialization). We leave this as a future work.
  \label{remark8}
\end{remark}

\section{An Analysis for Adam with $\beta_1 \geq \sqrt{\beta_2}$ }
\label{sec:beta1largerthansqrtbeta2}
\looseness=-1 In this section, we extend the technique in the proof of Theorem~\ref{theorem:GeneralRegretBoundPlessthan1} to derive a $\beta_1$-discounted regret bound for Adam with $\beta_1 \geq \sqrt{\beta_2}$.
With a different choice of choice of $(\alpha_t)_t$, we will show a bound that, surprisingly, reduces to the bound in~\citet[][Theorem B.1]{AhnAdamisFTRL2024} when $\beta_1 = \sqrt{\beta_2}$. 
Furthermore, under an oblivious adversary, this bound also suggests that choosing $\beta_2 = \beta_1^2$ is optimal.

With $\beta_1 \geq \sqrt{\beta_2}$, we have $p \geq 1$.
To ensure that the sequence of learning rates $(\eta_t)_t$ is non-increasing, we use $\alpha_t = \frac{\alpha}{p^{t-1}}$, where $\alpha > 0$ is a constant. 
In other words, $\alpha_t$ decay exponentially.
The learning rate $\eta_t$ in~\Cref{algo:AdamfromFTRL,algo:AdamfromFTRLWrittenInV} becomes
\begin{align}
  \eta_t = \alpha_t \frac{p^{t-1}}{\sqrt{\sum_{s=1}^{t-1} (p^s v_s)^2}} = \frac{\alpha}{\sqrt{\sum_{s=1}^{t-1} (p^s v_s)^2}},
  \label{eq:etatWhenplargerthan1}
\end{align}
which is non-increasing.
The pre-clipping update in round $t$ is $\bar{\Delta}_t = -\eta \sum_{s=1}^{t-1} v_s = -\frac{\alpha \sum_{s=1}^{t-1}v_s}{\sum_{s=1}^{t-1}(p^s v_s)^2}$, which could be understood as RMSProp with momentum.
The following theorem states the regret bound of this algorithm.
\begin{theorem}
  For any $\alpha > 0, T \geq 2$, Algorithm~\ref{algo:AdamfromFTRL} with $\beta_1 \geq \sqrt{\beta_2}, \alpha_t = \frac{\alpha}{p^{t-1}}$ guarantees
  \begin{align}
    R_T(u) &\leq \left(\frac{u^2}{\alpha} + \frac{\alpha\sqrt{6}}{2}\right)\sqrt{\sum_{t=1}^T \beta_2^{-t}g_t^2} + 7D_T \max_{0 \leq t \leq T} \abs{\beta_1^{-t} g_t}
    \label{eq:RegretBoundPlargerthan1}
  \end{align}
  \label{theorem:RegretBoundPlargerthan1}
\end{theorem}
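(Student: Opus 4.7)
The plan is to mirror the proof of~\Cref{theorem:GeneralRegretBoundPlessthan1}, adapted to the new regime $p \geq 1$ by exploiting the schedule $\alpha_t = \alpha/p^{t-1}$. The first observation is that this choice collapses the $p^{t-1}$ factor in $\eta_t$, giving $\eta_t = \alpha/\sqrt{\sum_{s=0}^{t-1}(p^sv_s)^2}$, whose denominator is non-decreasing in $t$, so $\eta_{t+1} \leq \eta_t$. With this monotonicity, the same per-round FTRL decomposition used in the proof of~\Cref{theorem:GeneralRegretBoundPlessthan1} (combining $F_{t+1}(x) = F_t(x) - (1/2\eta_t)x^2 + v_t x + (1/2\eta_{t+1})x^2$ with the optimality of $\Delta_t$ and the strong-convexity-based local-norm bound) carries over verbatim and gives
\begin{align*}
R_T(u) \leq \frac{u^2}{\eta_{T+1}} + \sum_{t=1}^T \min\left\{\frac{\eta_t v_t^2}{2},\, 2D_T|v_t|\right\},
\end{align*}
whose leading term immediately evaluates to $(u^2/\alpha)\sqrt{\sum_{t=0}^T \beta_2^{-t} g_t^2}$.

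The key modification, relative to~\Cref{theorem:GeneralRegretBoundPlessthan1}, is the threshold for the case split on the min. Instead of comparing $(p^t v_t)^2$ against $A = \sum_{s<t}(p^s v_s)^2$, I would compare $v_t^2$ against $\sum_{s<t} v_s^2$. In Case A ($v_t^2 \leq 2\sum_{s<t} v_s^2$), the inequality $A \geq \sum_{s<t} v_s^2$ (which follows from $p^s \geq 1$) lets me replace $\sqrt{A}$ in the denominator of $\eta_t v_t^2/2$ by the smaller $\sqrt{\sum_{s<t} v_s^2}$; combining the lemma $x^2/\sqrt{x^2+y} \leq \sqrt{2}(\sqrt{x^2+y}-\sqrt{y})$ with the case-A consequence $\sqrt{\sum_{s\leq t} v_s^2} \leq \sqrt{3}\sqrt{\sum_{s<t} v_s^2}$ then yields $\eta_t v_t^2/2 \leq (\alpha\sqrt{6}/2)(\sqrt{\sum_{s\leq t}v_s^2}-\sqrt{\sum_{s<t}v_s^2})$. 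In Case B ($v_t^2 > 2\sum_{s<t} v_s^2$) we get $|v_t| > \sqrt{2}\max_{s<t}|v_s|$, so $|v_t| = \max_{s\leq t}|v_s|$, and the same algebraic identity used in~\Cref{theorem:GeneralRegretBoundPlessthan1} (rewriting $|v_t| = (\sqrt{2}|v_t|-|v_t|)/(\sqrt{2}-1)$ and invoking the case condition) gives $2D_T|v_t| \leq 7D_T(\max_{s\leq t}|v_s|-\max_{s<t}|v_s|)$.

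Summing and telescoping then finishes the proof. Case A contributes at most $(\alpha\sqrt{6}/2)\sqrt{\sum_t v_t^2} \leq (\alpha\sqrt{6}/2)\sqrt{\sum_t (p^t v_t)^2} = (\alpha\sqrt{6}/2)\sqrt{\sum_t \beta_2^{-t} g_t^2}$ via one final use of $p \geq 1$, while Case B telescopes directly to $7D_T\max_t|v_t| = 7D_T\max_t \beta_1^{-t}|g_t|$. Adding the $u^2/\eta_{T+1}$ term produces~\Cref{eq:RegretBoundPlargerthan1}. The main obstacle is the choice of threshold: the direct analogue of~\Cref{theorem:GeneralRegretBoundPlessthan1}'s split would yield a Case B telescope ending at $\max_s p^s|v_s| = \max_s \beta_2^{-s/2}|g_s|$, which is strictly larger than $\max_s \beta_1^{-s}|g_s|$ for $p > 1$; switching the threshold to $v_t^2$ vs $\sum_{s<t}v_s^2$ lands it on precisely the right quantity, at the price of invoking $p^s \geq 1$ twice in Case A, once to lower bound $\sqrt{A}$ by $\sqrt{\sum v_s^2}$ and once at the end to absorb $\sqrt{\sum v_s^2}$ into $\sqrt{\sum (p^s v_s)^2}$.
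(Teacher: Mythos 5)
Your proposal is correct and follows essentially the same route as the paper's proof: the same non-increasing $\eta_t$ observation, the same FTRL decomposition into $\frac{u^2}{\eta_{T+1}} + \sum_t \min\{\eta_t v_t^2/2,\, 2D_T|v_t|\}$, and the same two-case telescoping. The only (immaterial) difference is the case-split threshold — you compare $v_t^2$ against $2\sum_{s<t}v_s^2$ while the paper compares it against $2\sum_{s<t}(p^sv_s)^2$ — and both choices work because $p\geq 1$ lets each case pass between $\sum v_s^2$ and $\sum (p^sv_s)^2$ in the needed direction.
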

Before showing the proof of Theorem~\ref{theorem:RegretBoundPlargerthan1}, we discuss its implication.
\begin{remark}
      Multiplying $\beta_1^T$ on both sides of~\Cref{eq:RegretBoundPlargerthan1} leads to
    \begin{align}
        R_{T, \beta_1}(u) = \beta_1^T R_{T}(u) \leq (\frac{u^2}{\alpha} + \frac{\alpha \sqrt{6}}{2}) \sqrt{\sum_{t=0}^T (\beta_1^2)^T\beta_2^{-t}g_t^2} + 7D_T \max_{t \in [0,T]} \abs{\beta_1^{T-t}g_t}.
        \label{eq:discountedRegretBoundPlargerthan1}
    \end{align}
    Under an oblivious adversary, the $\beta_1$-discounted regret bound in~\Cref{eq:discountedRegretBoundPlargerthan1} is decreasing in $\beta_2$.
    Therefore, setting $\beta_2 = \beta_1^2$ is optimal for an oblivious adversary.
    Furthermore, setting $\beta_2 = \beta_1^2$ recovers both ~\citet[][Theorem B.1]{AhnAdamisFTRL2024} and~\citet[][Theorem 9]{AhnNeurIPS2024AdamisEffectiveNonConvex}.
\end{remark}
\begin{proofof}{Theorem~\ref{theorem:RegretBoundPlargerthan1}}
Since the learning rates $(\eta_t)_t$ in~\Cref{eq:etatWhenplargerthan1} is non-increasing, we can proceed as in the proof of Theorem~\ref{theorem:GeneralRegretBoundPlessthan1} and obtain
\begin{align}
  R_T(u) &\leq \frac{u^2}{\eta_{T+1}} + \sum_{t=1}^{T} \min\left\{\frac{\eta_t v_{t}^2}{2},  2D_T\abs{v_{t}}\right\} \\
  &= \frac{u^2}{\alpha}\sqrt{\sum_{t=0}^T (p^t v_t)^2} + \sum_{t=1}^{T} \min\left\{\frac{\alpha v_{t}^2}{2 \sqrt{\sum_{s=0}^{t-1}(p^sv_s)^2}},  2D_T\abs{v_{t}}\right\}.
  \label{eq:boundRTuSecondStepPlargerthan1}
\end{align}
We consider two cases.
First, if $\abs{v_{t}} \geq \sqrt{2}\sqrt{\sum_{s=0}^{t-1} (p^s v_s)^2}$, then 
\begin{align}
    2D_T \abs{v_t} &= 2D_T \frac{\sqrt{2}\abs{v_t} - \abs{v_t}}{\sqrt{2}-1} \leq 2D_T \frac{\sqrt{2}}{\sqrt{2}-1}\left(\abs{v_t}- \sqrt{\sum_{s=0}^{t-1} (p^s v_s)^2}\right) \\
    &\leq 7D_T\left(\abs{v_t} - \sqrt{\sum_{s=0}^{t-1} v_s^2}\right) \leq 7D_T(\max_{0 \leq s \leq t}\abs{v_s} - \max_{0 \leq s \leq t-1}\abs{v_s}),
    \label{eq:boundMinSecondTermPlargerthan1}
\end{align}
where the the second inequality is due to $\sum_{s=1}^{t-1} (p^s v_s)^2 \geq \sum_{s=1}^{t-1}v_s^2$ for $p \geq 1$.

On the other hand, if $\abs{v_t} < \sqrt{2}\sqrt{\sum_{s=0}^{t-1} (p^s v_s)^2}$, then we have
\begin{align}
    \frac{v_t^2}{\sqrt{\sum_{s=0}^{t-1}(p^s v_s)^2}} &= {\sqrt{3}} \frac{v_t^2}{\sqrt{3\sum_{s=0}^{t-1}(p^s v_s)^2}} \leq {\sqrt{3}} \frac{v_t^2}{\sqrt{v_t^2 + \sum_{s=1}^{t-1}(p^s v_s)^2}} \\
    &\leq {\sqrt{6}} \left( \sqrt{v_t^2 + \sum_{s=0}^{t-1}(p^s v_s)^2} - \sqrt{\sum_{s=0}^{t-1}(p^s v_s)^2} \right) \\
    &\leq {\sqrt{6}} \left( \sqrt{\sum_{s=0}^{t}(p^s v_s)^2} - \sqrt{\sum_{s=0}^{t-1}(p^s v_s)^2} \right),
    \label{eq:boundMinFirstTermPlargerthan1}
\end{align}
where we used $p^{2t} v_t^2 \geq v_t^2$ for $p \geq 1$ in the last inequality.
Plugging~\Cref{eq:boundMinSecondTermPlargerthan1,eq:boundMinFirstTermPlargerthan1} into~\Cref{eq:boundRTuSecondStepPlargerthan1} and summing over $T$ rounds, we obtain, we obtain 
\begin{align}
    R_T(u) \leq \left(\frac{u^2}{\alpha} + \frac{\alpha\sqrt{6}}{2} \right) \sqrt{\sum_{t=0}^T (p^t v_t)^2} + 7D_T\max_{t \in [0,T]}\abs{v_t}..
\end{align}
Using $p = \frac{\beta_1}{\sqrt{\beta_2}}$ and $v_t = \beta_1^{-t}g_t$ leads to the desired statement in~\Cref{theorem:RegretBoundPlargerthan1}.
\end{proofof}

\section{When is setting $\beta_1 = \sqrt{\beta_2}$ not optimal? An Example with Non-Oblivious Adversary}
\label{sec:notoptimalNonObliviousAdversary}
When using Adam for optimizing a target function in the online-to-nonconvex objective $F(\cdot)$, for a fixed $\beta_1$, different values of $\beta_2$ inevitably leads to different sequence of updates $(\Delta_t)_t$, which in turn leads to different sequence of $w_t$.
Consequently, the sequence of gradients $(g_t)_t$ also varies with different values of $\beta_2$. 
It follows that the adversary is non-oblivious, since $g_{t+1}$ depends on the outputs $(\Delta_s)_{s=1,2,\dots,t}$ of the algorithm in past $t$ rounds.

\textbf{Setup.} While our results in the previous sections indicate that setting $\beta_1 = \sqrt{\beta_2}$ is optimal under an oblivious adversary, they do not have any implication for an non-oblivious adversary.
In this section, we present a result showing that $\beta_1 = \sqrt{\beta_2}$ may no longer be optimal when the adversary is non-oblivious.
To this end, we consider the bounded domain $\gX = [-1, 1]$, i.e. $D=1$.
Let $a, b \in (0,1)$ be two universal constants, $a \neq b$.
Let $K = \max\{\frac{1}{1-a}, \frac{1}{1-b}\}$, and $\alpha = \frac{1}{K}$. 

Fix $\beta_1 \in (0,1)$.
We will compare the regret of two instances of~\Cref{algo:AdamfromFTRLWrittenInV}, denoted by $\gA$ and $\gA'$. 
Algorithm $\gA$ uses $\beta_1 = p \sqrt{\beta_2}$, where $p < 1$. 
Algorithm $\gA'$ uses $\beta_1 = \sqrt{\beta_2}$.
Both algorithms use $\alpha_t = \alpha = \frac{1}{K}$.

\textbf{Non-oblivious Adversary.}
Fix an arbitrary $v > 0$ and define two sequences $v_t = a^t v$ and $v'_t = b^t v$ for $t = 0, 1, 2, \dots$.
Note that $v_0 = v'_0 = v$, which resembles practical scenarios where the very first gradients evaluated when the model has just been initialized.
For algorithms $\gA$ and $\gA'$, the adversary will use the losses $(v_t)_t$ and $(v'_t)_t$, respectively.

\textbf{Regret Analysis.}
Let $\Delta_t$ and $\Delta'_t$ be the updates in round $t$ of $\gA$ and $\gA'$, respectively. 
Because all of losses $(v_t)_t$ and $(v'_t)_t$ are positive, the optimal updates for both $\gA$ and $\gA'$ is $u = -D$.
The following theorem shows that the regret of $\gA$ is strictly smaller than that of $\gA'$, which indicates that setting $\beta_1 = \sqrt{\beta_2}$ is not optimal.

\begin{theorem} Let $u = -D$. For any $T \geq 2$, the sequence of updates from $\gA$ and $\gA'$ satisfy
  \begin{align}
    \sum_{t=1}^T v_t(\Delta_t - u) < \sum_{t=1}^T v'_t (\Delta'_t - u).
  \end{align}
  \label{theorem:notoptimal}
\end{theorem}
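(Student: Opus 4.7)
The plan is to establish the strict inequality by comparing a simple upper bound on the regret of $\gA$ with a simple lower bound on the regret of $\gA'$, exploiting the fact that under the non-oblivious adversary $\gA$ is fed a faster-decaying geometric loss sequence than $\gA'$. Since all losses are strictly positive and $u = -D = -1$, the pre-clipping FTRL minimiser $-\eta_t \sum_{s=0}^{t-1} v_s$ is always non-positive, so after clipping onto $\gX = [-1, 1]$ both $\Delta_t$ and $\Delta'_t$ lie in $[-1, 0]$; in particular $\Delta_t - u = \Delta_t + 1 \in [0, 1]$. This range is essentially all the control I need on the updates for $t \geq 2$.

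First I would compute the round-one update exactly. At $t = 1$ we have $\eta_1 = \alpha / v_0$ and $v_0 = v'_0 = v$, so both algorithms return $\Delta_1 = \Delta'_1 = -\alpha$ (clipping is inactive because $\alpha \leq 1$). This immediately gives a lower bound on the regret of $\gA'$ by retaining only the round-one contribution:
\begin{equation*}
    \sum_{t=1}^T v'_t (\Delta'_t + 1) \;\geq\; v'_1 (1-\alpha) \;=\; b\,v\,(1-\alpha).
\end{equation*}
Symmetrically, the regret of $\gA$ admits a trivial geometric upper bound that uses only $\Delta_t + 1 \leq 1$:
\begin{equation*}
    \sum_{t=1}^T v_t (\Delta_t + 1) \;\leq\; \sum_{t=1}^T v_t \;\leq\; \frac{a\,v}{1-a}.
\end{equation*}

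To close the argument, I would fix the universal constants so that the lower bound strictly dominates the upper bound. Choosing $a < b$ gives $K = 1/(1-b)$ and hence $\alpha = 1-b$, reducing the lower bound to $b^2 v$; the strict inequality then follows from any pair $(a,b)$ with $a < b$ satisfying $a/(1-a) < b^2$ --- equivalently, $a < b^2/(1+b^2)$, which is easily met, for example by $a = 0.1$, $b = 0.5$. Since $a, b$ are declared as universal constants in the theorem's setup, the proof is free to fix a concrete pair in this regime.

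The main obstacle is conceptual rather than computational: under a non-oblivious adversary the two algorithms do not see the same input, so this is not a ``same input, different algorithm'' comparison, and one cannot directly reason about which algorithm makes a better move on a common stream. The crux is to notice that the round-one coincidence $\Delta_1 = \Delta'_1 = -\alpha$ already pins down a concrete regret gap at $t=1$, and that with the right asymmetry between $a$ and $b$ this single-round gap overwhelms the entire geometric tail bound on $\gA$'s regret --- so the intricate closed-form updates of either algorithm for $t \geq 2$ never need to be written down.
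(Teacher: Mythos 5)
Your proof is correct, but it takes a genuinely different route from the paper's. The paper computes the closed-form (pre-clipping) updates of both $\gA$ and $\gA'$ for every round, verifies via a dedicated lemma that clipping never activates, and then proves \emph{per-round} dominance $f_{\gA}(t,u) < f_{\gA'}(t,u)$ for all $t$ under the sufficient condition $a < b^2$ (a condition the theorem statement omits but the proof requires). You instead avoid the closed forms entirely for $t \geq 2$: you use only that $\Delta_t, \Delta'_t \in [-1,0]$ (which follows from positivity of the losses and the domain constraint, with no need for the no-clipping lemma), the exact round-one coincidence $\Delta_1 = \Delta'_1 = -\alpha$, a crude geometric upper bound $\sum_t v_t(\Delta_t+1) \leq av/(1-a)$, and a single-term lower bound $\sum_t v'_t(\Delta'_t+1) \geq b^2 v$. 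The trade-offs: your argument is more elementary and more robust (it would survive perturbations of the learning-rate schedule that destroy the closed forms), but it establishes only the aggregate inequality and needs the slightly stronger condition $a < b^2/(1+b^2)$ rather than $a < b^2$; since $a,b$ are free universal constants and the theorem is an existence-of-a-bad-instance claim, this restriction is immaterial. Both proofs share the implicit understanding that the right-hand side of the theorem should read $\Delta'_t$, and both silently strengthen the stated hypothesis $a \neq b$ to a quantitative gap between $a$ and $b$.
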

\begin{proof}
First, we show that no clipping is required in both $\gA$ and $\gA'$. 
Since both algorithms use the same constant $\alpha = \frac{1}{K}$ for the sequence $(\alpha_t)_t$, we can apply~\Cref{eq:barDeltatformula} twice, once for $\kappa = a$ and once for $\kappa = b$, to obtain 
\begin{align}
  \bar{\Delta}_t &= -\frac{p^{t-1} \sqrt{1-(pa)^2}}{K\sqrt{1- (pa)^{2t}}} \frac{1-a^t}{1-a}, \\
  \bar{\Delta'}_t &= -\frac{p^{t-1} \sqrt{1-(pb)^2}}{K\sqrt{1- (pb)^{2t}}} \frac{1-b^t}{1-b},
\end{align}
where $\bar{\Delta}_t$ and $\bar{\Delta'}_t$ are the pre-clipping updates in round $t$ of $\gA$ and $\gA'$, respectively.
Since $D = 1$, it suffices to show that $\abs{\bar{\Delta}_t} \leq 1$ and $\abs{\bar{\Delta'}_t} \leq 1$.
We have $\abs{\bar{\Delta}_t} \leq 1$ follow from $K(1-a) \geq 1$ and
\begin{align}
  p^{t-1}(1-a^t)\sqrt{1-(pa)^2} \leq \sqrt{1-(pa)^2} \leq \sqrt{1-(pa)^{2t}}
\end{align}
for any $t \geq 1$ and $pa \leq 1$. 
The same argument applies for $\bar{\Delta'}_t$. 
We conclude that no clipping happens, which implies that  $\Delta_t = \bar{\Delta}_t$ and $\Delta'_t = \bar{\Delta'}_t$ 

Let $u = -D$ be the comparator for both $\gA$ and $\gA'$.
The regret in each round $t$ of the algorithm $\gA$ and $\gA'$ are, respectively,
\begin{align}
  f_{\gA}(t,u) &\defeq v_t(\Delta_t - u) =  va^t\left( 1 - \frac{p^{t-1} \sqrt{1-(pa)^2}}{K\sqrt{1- (pa)^{2t}}} \frac{1-a^t}{1-a}  \right), \\
  f_{\gA'}(t,u) &\defeq v'_t(\Delta'_t - u) =  vb^t\left( 1 - \frac{\sqrt{1-b^2}}{K\sqrt{1- b^{2t}}} \frac{1-b^t}{1-b}  \right).
\end{align}
Finally, we specify a sufficient condition for $a$ and $b$ such that the per-round regret of $\gA$ is smaller than that of $\gA'$.
\begin{lemma}
  For any $a, b \in (0,1)$ such that $a < b^2$, we have $f_{\gA}(t,u) < f_{\gA'}(t,u)$ for all $t \geq 1$.
  \label{lemma:ftgt}
\end{lemma}
\begin{proof}
  Since $a < b^2 < b$, we have $1-a > 1-b$, thus $K = \max\{\frac{1}{1-a}, \frac{1}{1-b}\} = \frac{1}{1-b}$. 
  Hence,
  \begin{align}
    f_{\gA}(t,u) &= a^t\left( 1 - \frac{p^t \sqrt{1-(pa)^2}}{K\sqrt{1- (pa)^{2t}}} \frac{1-a^t}{1-a}  \right) \leq a^t < (b^2)^t = b^t (1 - (1-b^t)) \\
    &= b^t \left( 1 - \frac{1-b^t}{K(1-b)} \right) \leq b^t\left( 1 - \frac{\sqrt{1-b^2}}{K\sqrt{1- b^{2t}}} \frac{1-b^t}{1-b}  \right) = f_{\gA'}(t,u),
  \end{align}
  where the last inequality is due to $1-b^2 \leq 1-b^{2t}$ for any $b \in (0,1)$.
\end{proof}
The desired statement in~\Cref{theorem:notoptimal} immediately follows from Lemma~\ref{lemma:ftgt} and summing up over $t \in [T]$.
\end{proof}


\section{Conclusion and Future Works}
In this work, we studied the Adam optimizer from an online learning perspective. 
By considering Adam as an instance of FTRL,
we derived more general discounted regret bounds for Adam that hold beyond the restrictive setting of $\beta_1 = \sqrt{\beta_2}$ often required in existing works.
For both cases $\beta_1 \geq \sqrt{\beta_2}$ and $\beta_1 \leq \sqrt{\beta_2}$, our new analyses and their bounds strictly generalize existing results.
Moreover, we show that our bounds are worst-case tight and cannot be significantly improved furthers.
Our results imply that when using Adam in the online-to-nonconvex framework, a rigorous approach towards tuning the two momentum factors $\beta_1$ and $\beta_2$ would require an exact modelling of the adversary. 
Future works include characterizing the adversary, i.e. how the sequence of gradients changes according to different values of $\beta_1, \beta_2$, on popular convex and non-convex objectives.

\acks{We thank Ali Mortazavi and Nishant Mehta for useful discussion on the proof and interpretation of Theorem 1. We thank the helpful suggestions from all reviewers. In particular, our Remarks~\ref{remark7} and~\ref{remark8} on clarifying the significance of our lower bound come from our rebuttal answer to reviewer 2J4Y.}

\bibliography{momentumAdamOL.bib}

\appendix




\section{Technical Lemmas}
\begin{lemma}
  For any $x \in (0,1], y \geq \frac{1}{x^2}$, we have $ \frac{x^t(y^t - 1)}{\sqrt{(x^2y^2)^t - 1}} \leq 1$ for all $t \geq 1$.
  \label{lemma:technicalLemma1}
\end{lemma}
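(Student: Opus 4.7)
The plan is to square both sides and reduce the inequality to a simple statement about $y^t$ versus $x^{-2t}$. Both sides of the stated inequality are non-negative: since $y \geq 1/x^2 \geq 1$ we have $y^t \geq 1$ so the numerator is $\geq 0$, and since $xy \geq 1/x \geq 1$ we have $(xy)^{2t} \geq 1$ so the radicand is $\geq 0$. Therefore it is equivalent to show
\begin{align}
  x^{2t}(y^t - 1)^2 \leq (xy)^{2t} - 1.
\end{align}

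Expanding the left-hand side gives $(xy)^{2t} - 2x^{2t}y^t + x^{2t}$, so after cancelling $(xy)^{2t}$ the inequality reduces to $x^{2t} + 1 \leq 2 x^{2t} y^t$, or equivalently
\begin{align}
  y^t \geq \tfrac{1}{2}\left(1 + x^{-2t}\right).
\end{align}

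Now I would finish using the hypothesis $y \geq x^{-2}$. Raising to the power $t \geq 1$ (both sides are $\geq 1$) gives $y^t \geq x^{-2t}$. Since $x \in (0,1]$, we have $x^{-2t} \geq 1$, and so $x^{-2t} = \tfrac{1}{2}(x^{-2t} + x^{-2t}) \geq \tfrac{1}{2}(1 + x^{-2t})$. Chaining these two bounds yields the required inequality, completing the proof. The main obstacle, if any, is just making sure the squaring step is justified by sign considerations; the rest is routine algebra driven by the hypothesis $y \geq 1/x^2$.
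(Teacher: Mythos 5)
Your proof is correct and follows essentially the same route as the paper's: square both sides, cancel $(xy)^{2t}$, and reduce to $2y^t \geq 1 + x^{-2t}$, which follows from $y^t \geq x^{-2t} \geq 1$. Your explicit verification that both sides are nonnegative before squaring is a small detail the paper omits, but the argument is otherwise identical.
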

\begin{proof}
  The inequality is equivalent to 
  \begin{align*}
    & x^{2t}(y^{2t} - 2y^t + 1) \leq x^{2t}y^{2t} - 1 \\
    \Leftrightarrow \quad & x^{2t}(2y^t - 1) \geq 1 \\
    \Leftrightarrow \quad &  2y^t \geq 1 + \frac{1}{x^{2t}}.
  \end{align*}  
  The last inequality is true due to $2y^t \geq \frac{2}{x^{2t}}  \geq 1 + \frac{1}{x^{2t}}$ holds for all $x \in (0,1]$.
\end{proof}

\begin{lemma}
  For any $x \in (0,0.6], y \geq \frac{1}{x^2}$, we have $\frac{\sqrt{x^2y^2-1}}{x(y-1)} \leq 2$.
  \label{lemma:technicalLemma2}
\end{lemma}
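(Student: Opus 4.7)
\textbf{Proof Plan for Lemma~\ref{lemma:technicalLemma2}.} The plan is to reduce the target inequality to a polynomial inequality in $y$ and show that the resulting quadratic is always strictly positive when $x$ is small enough.

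First I would verify that the hypotheses make squaring legal. For $x \in (0, 0.6]$ and $y \geq 1/x^2$, we have $x \leq 1$, so $y \geq 1/x^2 \geq 1/0.36 > 2$, which gives $y - 1 > 0$, so the denominator $x(y-1)$ is strictly positive. Moreover $xy \geq 1/x \geq 1$, hence $x^2 y^2 - 1 \geq 0$, and the square root in the numerator is well-defined and non-negative. Both sides of $\frac{\sqrt{x^2 y^2 - 1}}{x(y-1)} \leq 2$ are thus non-negative, so the inequality is equivalent to its squared form.

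Next I would square and clear denominators: the inequality becomes $x^2 y^2 - 1 \leq 4 x^2 (y-1)^2$, which rearranges to
\begin{align}
  Q(y) \defeq 3 x^2 y^2 - 8 x^2 y + (4 x^2 + 1) \geq 0.
\end{align}
Viewing $Q$ as a quadratic in $y$ with leading coefficient $3 x^2 > 0$, its discriminant is
\begin{align}
  (8 x^2)^2 - 4 \cdot 3 x^2 \cdot (4 x^2 + 1) = 16 x^4 - 12 x^2 = 4 x^2 (4 x^2 - 3).
\end{align}
For $x \in (0, 0.6]$ we have $4 x^2 \leq 1.44 < 3$, so the discriminant is strictly negative, and hence $Q(y) > 0$ for all real $y$. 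This establishes the claim.

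There is no genuine obstacle here: the restriction $x \leq 0.6$ is exactly what is needed to keep the discriminant negative (any $x < \sqrt{3}/2$ would in fact work), and the condition $y \geq 1/x^2$ is only invoked to justify squaring — the polynomial inequality itself is unconditional in $y$.
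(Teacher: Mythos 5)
Your proof is correct and follows essentially the same route as the paper: both square the inequality (legitimate since both sides are non-negative) and reduce it to the same quadratic inequality $3x^2y^2 - 8x^2y + 4x^2 + 1 \geq 0$. The only difference is the closing step: the paper invokes the hypothesis $y \geq 1/x^2 > 8/3$ to conclude that $3y^2 - 8y + 4 > 0$, whereas you observe that the discriminant $4x^2(4x^2-3)$ is negative for $x \leq 0.6$, so the quadratic is positive for \emph{all} real $y$. Your variant is marginally stronger — it shows the constraint on $y$ is only needed to justify squaring, not for the polynomial inequality itself — and you are also more careful than the paper in explicitly checking that the denominator is positive before squaring.
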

\begin{proof}
  The inequality is equivalent to 
  \begin{align*}
    & 4x^{2}(y^{2} - 2y + 1) \geq x^{2}y^{2} - 1 \\
    \Leftrightarrow \quad & x^{2}(3y^2 - 8y + 4) \geq -1.
  \end{align*}  
  The last inequality is true due to $y \geq \frac{1}{x^{2}}  > \frac{8}{3}$ holds for all $x \in (0,0.6]$.
\end{proof}


\section{A Recent Empirical Finding From~\citet{Orvieto2025SearchAdamSecretSauce}}
\begin{figure}[h]
  \centering
  \includegraphics[scale=0.75]{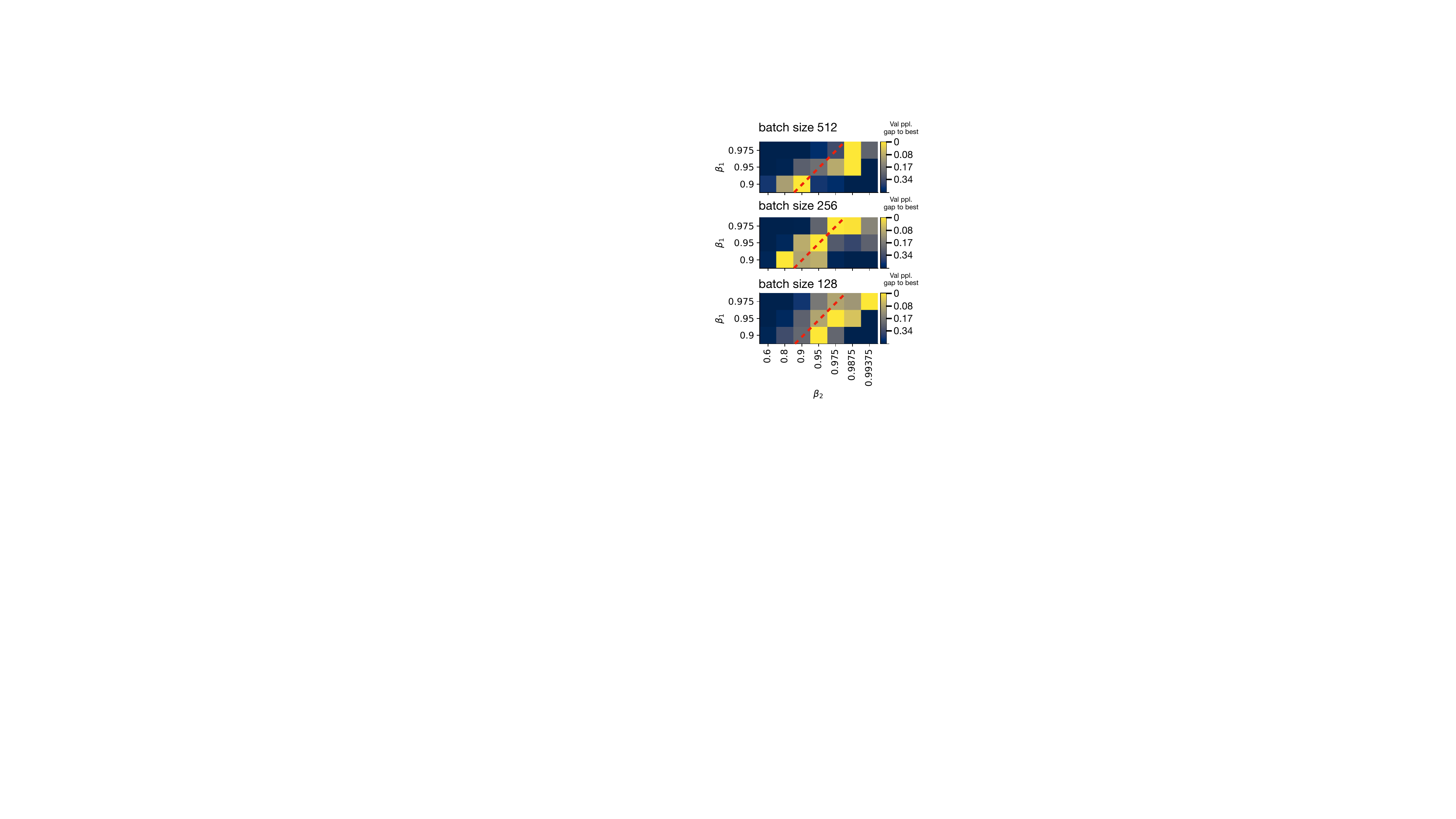}
  \caption{This is Figure 3 in~\citet{Orvieto2025SearchAdamSecretSauce}, demonstrating the empirical results of tuning $\beta_1, \beta_2$ across three batch sizes for training $160M$-parameter transformers.
  Yellow indicates optimal performances, while dark blue indicates sub-optimal performances. The smallest $\frac{\beta_1}{\sqrt{\beta_2}}$ ratio of a yellow box is approximately $1$, achieved at batch size $256$, $\beta_1 = 0.9$ and $\beta_2 = 0.8$.}
  \label{fig:betascorr}
\end{figure}

\end{document}